\documentclass{article}
\usepackage{paperstyle,times}

\usepackage{amsmath,amsfonts,bm}

\def\eqref#1{equation~\ref{#1}}

\def\1{\bm{1}}

\DeclareMathAlphabet{\mathsfit}{\encodingdefault}{\sfdefault}{m}{sl}
\SetMathAlphabet{\mathsfit}{bold}{\encodingdefault}{\sfdefault}{bx}{n}

\usepackage{hyperref}
\usepackage{url}
\usepackage{amsmath}
\usepackage{amssymb}
\usepackage{amsthm}
\newtheorem{theorem}{Theorem}
\usepackage{caption}
\usepackage{float}
\usepackage{makecell}
\usepackage{graphicx}
\usepackage{wrapfig}
\usepackage{subfigure}
\usepackage{booktabs}
\usepackage{multirow}
\usepackage{algorithm}
\usepackage{algpseudocode}

\title{The Error You See Is Not the Error You Made: Progression-aware Reasoning Origin for Reasoning Error Localization}

\author{\textbf{Yiguo Wang}$^{a,b}$, \textbf{Ziyuan Yang}$^{b}$, \textbf{Yi Zou}$^{c}$, \textbf{Dan Lin}$^{d}$, \textbf{Rongsheng Li}$^{d}$, \textbf{Yi Zhang}$^{b}$\\[0.3em]
\normalfont
$^{a}$ Xiaopeng Honors College, Nanchang Hangkong University, China\\
$^{b}$ School of Cyber Science and Engineering, Sichuan University, China\\
$^{c}$ School of Software, Nanchang Hangkong University, China\\
$^{d}$ School of Computer Science and Technology, Harbin Engineering University, China\\
}
\date{}

\finalcopy

\begin{document}

\maketitle

\begin{abstract}
Verifying multi-step LLM reasoning requires more than determining whether a trace is correct: a useful verifier should identify where the reasoning first goes wrong. However, existing holistic methods provide little positional evidence, while forward sequential verification often treats the first rejected step as the error source. Under error propagation, this assumption can fail, since an earlier mistake may remain locally plausible and become observable only through its downstream consequences. We therefore \emph{rethink reasoning verification as a progression-aware error-source localization problem}: rather than asking only where a reasoning trace first appears inconsistent, we ask which earlier step best explains how that inconsistency emerges along the trajectory. Based on this view, we propose \textbf{Progression-aware Reasoning Origin (PRO)}, a training-free framework for first-error localization. PRO jointly models incoming support from the preceding context and outgoing compatibility with subsequent reasoning, selectively refines regions where these signals disagree, and finally performs detector-conditioned source attribution with intervention-based evidence to distinguish the true error origin from its propagated manifestations. We further formalize the gap between forward rejection and structural exposure, showing why incoming-side evidence alone is insufficient for reliable localization under error propagation. Experiments across open-form, medical, and structured reasoning tasks demonstrate consistent improvements over strong verification baselines, supporting progression-aware source attribution as a more faithful formulation of reasoning verification.

\end{abstract}

\section{Introduction}

Large language models (LLMs) are increasingly capable of solving complex multi-step reasoning tasks, driven in part by advances in chain-of-thought prompting and zero-shot reasoning \citep{wei2022chain,kojima2022large}, decoding-time aggregation such as self-consistency \citep{wang2023selfconsistency}, and more structured inference-time search paradigms \citep{yao2023tree,besta2023graph}. Yet verifying their reasoning remains difficult. Final-answer checking is insufficient because a trace can contain invalid intermediate steps even when it reaches a correct answer. A useful verifier must therefore determine not only whether a trace is flawed, but also which step first introduces the error.

Existing verification methods mainly follow two paradigms. Related answer-level self-verification and critique frameworks further show the practical value of test-time checking, but they still focus primarily on improving or validating complete responses rather than localizing the earliest corrupted step \citep{weng2023selfverification,gou2024critic}. Holistic verifiers assess the complete chain and return a single judgment, providing little positional evidence for error attribution. Sequential verifiers instead decompose a trace into smaller units and validate them step by step, improving interpretability and control. However, standard sequential verification is driven primarily by forward local checking: it asks whether the current block is supported by the problem and the accepted prefix. Under error propagation, the first block rejected by this procedure can be a downstream exposure point rather than the step that introduced the error. A locally plausible block may become suspicious only when its consequences appear in later reasoning, creating a gap between error exposure and error attribution.

\textbf{\emph{When errors propagate through a reasoning trace, how can a verifier localize the first erroneous step rather than a later step at which the error becomes observable?}}

Addressing this question requires moving beyond verification as a sequence of independent accept-or-reject decisions. Reasoning errors are inherently \emph{progressive}: once introduced, they can distort subsequent context while leaving later steps locally coherent with a corrupted trajectory. Consequently, the first observable failure may occur after the true error source, creating a gap between error origin and error exposure. Reliable localization therefore requires reasoning not only about local validity, but also about how verification evidence evolves along the trajectory. Downstream consequences can provide retrospective evidence that exposes earlier mistakes, while later rejected steps may merely reflect propagated errors. This motivates explicitly modeling the progression of verification evidence to distinguish where an error originates from where it first becomes observable.

Motivated by the progression-aware view, we propose \textbf{Progression-aware Reasoning Origin~(PRO)}, a training-free framework for progression-aware error-source localization in decomposed reasoning traces. PRO treats each reasoning step as both a conclusion supported by its preceding context and a premise that should remain compatible with its subsequent continuation. Accordingly, it evaluates complementary \emph{incoming} and \emph{outgoing} evidence: the former measures whether a step is justified by the problem and the verified prefix, while the latter examines whether the step can consistently support the reasoning that follows. Their agreement provides stronger evidence for localization, whereas disagreement indicates a structurally ambiguous region in which the exposed failure may not coincide with the true source.

Rather than making an immediate decision at such ambiguous positions, PRO progressively narrows the candidate source. It first performs coarse-grained trajectory-level verification to identify suspicious regions, then selectively refines the reasoning granularity only where the evidence remains inconclusive. Finally, conditioned on the observed verification pattern, PRO compares the remaining source candidates to distinguish the step that introduces the error from downstream steps that merely inherit or expose it. This coarse-to-fine procedure enables PRO to exploit both local consistency and downstream consequences while avoiding the forward-only assumption that the earliest rejected step is necessarily the error origin. Our main contributions can be summarized as:
\begin{itemize}
    \item We formulate reasoning verification as a first-error localization problem, and argue that accurate localization requires more than purely forward local checking.
     \item We propose PRO, a progression-aware verification framework that combines backward consistency checking, mismatch-triggered fine-grained refinement, and top-\(k\) candidate localization.
    \item We formalize the discrepancy between forward rejection and structural exposure under error propagation, providing a theoretical basis for incorporating outgoing evidence in earliest-error localization.
\end{itemize}

\section{Related Work}
\textbf{\textit{Reasoning Verification.}} Existing verification methods are mainly holistic or decomposition-based. Early verifier work learns critics to score complete sampled solutions and rerank candidates \citep{cobbe2021training,uesato2022solving}. Recent methods move toward finer process assessment by inspecting steps, blocks, or explicit structural relations. GoV organizes verification units into graph-structured blocks with adaptive granularity \citep{fang2025graphverification}, while ProcessBench and DeltaBench provide earliest-error benchmarks and show that process-error detection remains difficult even for strong PRMs and critic-style verifiers \citep{zheng2025processbench,he2025deltabench}.

\textbf{\textit{Process Supervision.}} Process supervision provides step-level feedback rather than relying only on final outcomes. \citet{uesato2022solving} compare process- and outcome-based supervision for mathematical reasoning, and \citet{lightman2023lets} establish the modern PRM paradigm with PRM800K. Subsequent work reduces annotation cost through automated process labeling, as in Math-Shepherd \citep{wang2024mathshepherd} and OmegaPRM \citep{luo2024improve}. Together, these methods make dense intermediate supervision a standard tool for verifier training, reranking, and reinforcement learning.

\textbf{\textit{Self-Correction and Test-Time Verification.}} Another related line improves reasoning by critique, revision, and verification at inference time. Representative examples here include Self-Refine \citep{madaan2023selfrefine}, Chain-of-Verification \citep{dhuliawala2024chain}, ProCo \citep{wu2024proco}, and Derailer-Rerailer \citep{wan2025derailerrerailer}. At the same time, recent analysis also shows that intrinsic self-correction remains limited without strong external feedback or explicit verification signals \citep{huang2023large}. These methods mainly focus on improving or stabilizing generated reasoning, rather than locating the earliest erroneous step under explicit process annotations.

\section{Detector-Stage Gap in Sequential Verification}
\subsection{Incoming Admissibility and Structural Exposure}
The limitation of forward-only verification arises from the evidence available at each block. A standard sequential verifier evaluates \(b_i\) only from its incoming context, namely the problem \(q\) and the accepted prefix \(\mathcal{P}_{i-1}\). Consequently, a block may remain locally admissible even when it introduces a defect whose inconsistency becomes visible only through the subsequent transition. In such cases, the earliest forward rejection occurs after the underlying error has already entered the trajectory. To characterize this gap, we distinguish incoming admissibility from outgoing compatibility.

Specifically, we define $I_i=\mathbf{1}\!\left[(q,\mathcal{P}_{i-1}) \models b_i\right],
O_i=\mathbf{1}\!\left[b_i \vdash b_{i+1}\right], i<n$, where $\models$ denotes semantic-logical support from the problem and preceding reasoning, and $b_i \vdash b_{i+1}$ denotes whether $b_i$ provides a valid basis for the next reasoning block. Thus, $I_i$ characterizes whether $b_i$ is admissible from its incoming context, whereas $O_i$ characterizes whether its local continuation remains structurally compatible with the subsequent reasoning.

We then define the mismatch set $\mathcal{M}(q,\mathcal{B})
=
\{\,i<n \mid I_i=1,\ O_i=0\,\}$, which contains blocks that appear admissible under forward checking but fail to support their immediate continuation. These blocks are precisely the cases that cannot be exposed from prefix evidence alone.

Under an ideal forward-only verifier, the earliest rejection index is
$\tau_{\mathrm{seq}}(q,\mathcal{B})
=
\min\bigl(\{\,i \mid I_i=0\,\}\cup\{\infty\}\bigr)$. By contrast, once outgoing compatibility is also considered, the earliest detector-stage exposure becomes $\tau_{\mathrm{exp}}(q,\mathcal{B})
=
\min\bigl(
\{\,i \mid I_i=0\,\}
\cup
\mathcal{M}(q,\mathcal{B})
\cup
\{\infty\}
\bigr)$.

Importantly, \(\tau_{\mathrm{exp}}\) is not yet the final first-error localization \(f(q,\mathcal{B})\). It only identifies the earliest block at which either an incoming inconsistency or an outgoing incompatibility becomes detectable. The exposed block may itself be erroneous, or it may reflect the downstream consequence of an earlier error. A subsequent source-attribution stage is therefore required to determine which preceding block first introduced the inconsistency.

\subsection{Formal Properties of the Detector-Stage Gap}
The definitions above nevertheless reveal a fundamental limitation of forward-only detection. If outgoing compatibility provides no additional information, then $\tau_{\mathrm{exp}} $reduces to the ordinary forward-rejection index $\tau_{\mathrm{seq}}$. However, whenever a block remains admissible from its prefix but fails to support the subsequent transition, the outgoing-side test can expose a suspicious location before any forward rejection occurs. This suggests two questions: whether such earlier exposure is formally guaranteed to be no later than forward rejection, and whether it can be recovered from incoming-side evidence alone. The following results answer these questions.
\begin{theorem}[Strict Gap Between Forward Rejection and Structural Exposure]
To make the comparison with ordinary forward rejection explicit, the ordering relation can be formulated as:
\begin{equation}
\tau_{\star}(q,\mathcal{B})\le \tau_{\mathrm{seq}}(q,\mathcal{B}).
\end{equation}
Moreover, there exist \(q\) and \(\mathcal{B}\) such that the inequality is strict.
\end{theorem}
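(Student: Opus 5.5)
The statement writes \(\tau_{\star}\); I read this as the detector-stage exposure index \(\tau_{\mathrm{exp}}\) defined just above. The argument then has two parts: an order-theoretic inclusion for the inequality, and an explicit witness for strictness.

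\emph{The inequality.} Write \(R=\{\,i \mid I_i=0\,\}\). By definition,
\(\tau_{\mathrm{seq}}=\min(R\cup\{\infty\})\) and \(\tau_{\mathrm{exp}}=\min(R\cup\mathcal{M}(q,\mathcal{B})\cup\{\infty\})\).
Since \(R\cup\{\infty\}\subseteq R\cup\mathcal{M}(q,\mathcal{B})\cup\{\infty\}\), and the minimum over a larger subset of \(\mathbb{N}\cup\{\infty\}\) is no larger, we get \(\tau_{\mathrm{exp}}\le\tau_{\mathrm{seq}}\). This holds for every \(q\) and \(\mathcal{B}\), including the case \(R=\emptyset\), where \(\tau_{\mathrm{seq}}=\infty\).

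I would also record the exact condition for strictness. Strict inequality holds if and only if some \(j\in\mathcal{M}(q,\mathcal{B})\) satisfies \(j<\tau_{\mathrm{seq}}\). Such a \(j\) is automatically not in \(R\), since \(j\in\mathcal{M}\) already forces \(I_j=1\). So strictness reduces to the existence of a mismatch block that occurs before any incoming rejection.

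\emph{Strictness.} I would build a concrete trace with \(n=2\) blocks and the following profile:
\begin{itemize}
\item \(b_1\) is admissible from the problem, so \(I_1=1\), but \(b_1\not\vdash b_2\), so \(O_1=0\);
\item \(b_2\) is still judged admissible from \((q,\mathcal{P}_1)\), so \(I_2=1\).
\end{itemize}
This gives \(R=\emptyset\) and \(\tau_{\mathrm{seq}}=\infty\), while \(1\in\mathcal{M}\) gives \(\tau_{\mathrm{exp}}=1\). If a finite \(\tau_{\mathrm{seq}}\) is preferred, append a third block \(b_3\) with \(I_3=0\). Then \(\tau_{\mathrm{seq}}=3>1=\tau_{\mathrm{exp}}\).

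A concrete instance: let \(q\) ask for \(2\cdot 3+4\). Take \(b_1\) to be "first compute the product \(2\cdot 3\)", which is supported by \(q\). Take \(b_2\) to be "the product is \(7\), so the answer is \(11\)". The relation \(\vdash\) fails from \(b_1\) to \(b_2\). Meanwhile \(b_2\) is accepted by an incoming check that sees only a coarse summary of the prefix, and is not rejected under the idealized \(\models\).

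\emph{Main obstacle.} The relations \(\models\) and \(\vdash\) are specified only informally, so the witness cannot be checked mechanically against them. I would handle this by treating \(I_i\) and \(O_i\) as arbitrary Boolean predicates and noting that nothing in their definitions forces \(O_i=1\) whenever \(I_i=1\) and \(I_{i+1}=1\). That shows the profile \((I_1,O_1,I_2)=(1,0,1)\) is consistent with the definitions. The natural-language example above then serves only as an illustration that this profile arises in practice, not as the proof of its consistency.
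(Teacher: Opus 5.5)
Your proof is correct and follows essentially the same route as the paper: the inequality comes from the inclusion $\{i \mid I_i=0\}\cup\{\infty\}\subseteq\{i \mid I_i=0\}\cup\mathcal{M}(q,\mathcal{B})\cup\{\infty\}$, and strictness comes from a trace with all $I_i=1$ but some $O_j=0$, of which your $n=2$ profile is an instance. One small remark: the witness does not need $I_2=1$, which an idealized $\models$ would plausibly deny for ``the product is $7$''; since $I_1=1$ and $O_1=0$ alone already give $\tau_{\star}=1<2\le\tau_{\mathrm{seq}}$, your arithmetic example works as is with $I_2=0$.
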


\begin{theorem}[Necessity of Outgoing-Side Information]
To show why incoming-side evidence alone cannot determine structural exposure, we consider two traces that are indistinguishable from prefix-only signals. Their separation can be expressed as:
\begin{equation}
\forall k,\ I_k(\mathcal{B})=I_k(\mathcal{B}'), \qquad
\tau_{\star}(q,\mathcal{B})\neq \tau_{\star}(q,\mathcal{B}').
\end{equation}
Consequently, no detector rule that depends only on incoming-side signals \(\{I_k\}\) can recover \(\tau_{\star}\) correctly for all decomposed reasoning traces.
\end{theorem}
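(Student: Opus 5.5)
We read \(\tau_{\star}\) as the detector-stage exposure index \(\tau_{\mathrm{exp}}\) defined above. The statement to prove is the second theorem, which asserts that there exist \(q\), \(\mathcal{B}\), \(\mathcal{B}'\) with identical incoming signals but different exposure indices, and that therefore no incoming-only rule can recover \(\tau_{\star}\). The plan is to exhibit one explicit pair of traces and then derive the impossibility claim by a short functional argument. The construction can reuse the same kind of witness needed for strictness in the first theorem, namely a block that is admissible from its prefix but fails to support its continuation.

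\textbf{Step 1 (construction).} Fix a problem \(q\) and take two traces \(\mathcal{B}=(b_1,b_2,b_3)\) and \(\mathcal{B}'=(b_1',b_2',b_3')\) of the same length \(n=3\). Set \(b_1'=b_1\), a correct opening step, so that \(I_1=I_1'=1\) and we may take \(O_1=O_1'=1\). In \(\mathcal{B}\), let \(b_2\) be a locally plausible step that is supported by \((q,b_1)\) and validly yields \(b_3\), with \(b_3\) in turn supported by its prefix. Then \(I_k=1\) and \(O_k=1\) for all relevant \(k\), so \(\tau_{\mathrm{exp}}(q,\mathcal{B})=\infty\).

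In \(\mathcal{B}'\), let \(b_2'\) also be supported by \((q,b_1)\), giving \(I_2'=1\), but choose it so that \(b_2'\not\vdash b_3'\), giving \(O_2'=0\). At the same time, choose \(b_3'\) so that \((q,\mathcal{P}'_2)\models b_3'\) still holds, giving \(I_3'=1\). A concrete instance: \(b_2'\) introduces an intermediate quantity, and \(b_3'\) restates a fact derivable directly from \(q\) and \(b_1\) but uses an inference that purports to follow from \(b_2'\) and does not. Then \(2\in\mathcal{M}(q,\mathcal{B}')\) and \(\tau_{\mathrm{exp}}(q,\mathcal{B}')=2\).

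\textbf{Step 2 (verification and impossibility).} By construction \(I_k(\mathcal{B})=I_k(\mathcal{B}')\) for all \(k\), while the exposure indices are \(\infty\) and \(2\), so they differ. Now suppose a detector \(D\) depends only on the vector \((I_k)_k\) and satisfies \(D=\tau_{\star}\) on all traces. The two traces present the same input vector, so \(D\) must return the same value on both, namely \(D(\mathcal{B})=D(\mathcal{B}')\). But \(\tau_{\star}(q,\mathcal{B})\neq\tau_{\star}(q,\mathcal{B}')\), which is a contradiction. Hence no incoming-only detector recovers \(\tau_{\star}\) on all decomposed traces.

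\textbf{Main obstacle.} The hard part is justifying that \(I\) and \(O\) can vary independently. The relations \(\models\) and \(\vdash\) are only semantically specified, so we must argue that \((q,\mathcal{P}_2)\models b_3\) can hold while \(b_2\vdash b_3\) fails. The first thing to try is to stress that the definition of \(I_i\) evaluates support from the whole prefix together with \(q\), whereas \(O_i\) evaluates whether the specific block \(b_i\) is a valid basis for \(b_{i+1}\). These are distinct predicates, so the construction is legitimate as a model. If a concrete arithmetic example is preferred, we would exhibit the step contents explicitly.
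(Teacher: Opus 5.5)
Your proof is correct and follows essentially the same route as the paper's. The paper also takes two traces with all $I_k=1$, places a single outgoing mismatch at block $1$ (where you use block $2$ with $n=3$), gets $\tau_{\star}=1$ versus $\tau_{\star}=\infty$, and concludes by the same observation that an incoming-only rule must give both traces the same output. Your added discussion of why $I$ and $O$ can vary independently goes beyond the paper, which simply posits the required signal values.
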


These results do not by themselves recover the final source target \(f(q,\mathcal{B})\). Instead, they motivate a forward-anchored detector augmented with an outgoing-side structural test. PRO subsequently refines these pre-refinement exposure candidates before using the surviving detector output for source localization.

\section{Progression-aware Reasoning Origin}
\label{sec:method}

\subsection{Overview}
In this paper, we revisit reasoning verification from a \emph{progression-aware} perspective. Beyond checking whether each block is supported by its preceding context, we also examine its compatibility with subsequent reasoning, allowing downstream inconsistency to provide retrospective evidence for distinguishing error origins from propagated failures. Motivated by this, we propose a training-free framework PRO for first-error localization. As illustrated in Figure~\ref{fig:pro_method}, PRO consists of three stages. First, \textbf{progression-aware evidence modeling} jointly examines the incoming support of each block and its compatibility with subsequent reasoning to identify potential progression inconsistencies. Second, \textbf{mismatch-triggered granularity refinement} selectively re-examines ambiguous regions at a finer reasoning granularity, reducing uncertainty introduced by coarse block decomposition and local transition noise. Third, \textbf{origin-oriented candidate attribution} uses the refined exposure position as a detector anchor, constructs a detector-conditioned shortlist, and further compares these candidates with rewrite-supported evidence, thereby tracing the observed failure back to its most plausible error origin. Through this progression from inconsistency detection to ambiguity resolution and source attribution, PRO explicitly separates \emph{where an error becomes observable} from \emph{where it is introduced}.

\begin{figure}[t]
    \centering
    \includegraphics[width=\linewidth]{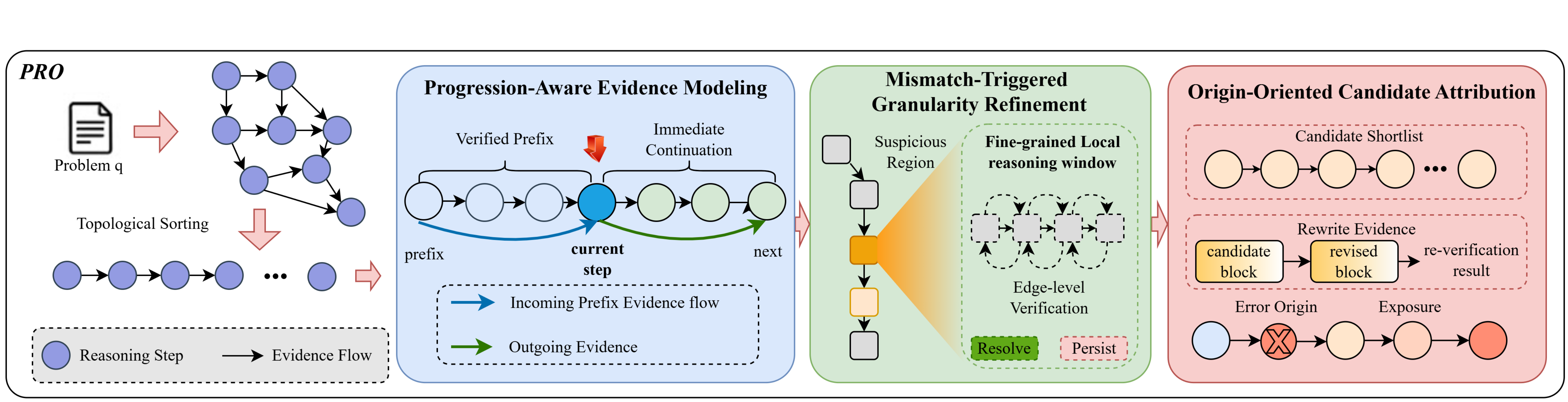}
        \caption{The overview of the proposed PRO.}
        \vspace{-15pt}
    \label{fig:pro_method}
\end{figure}

\subsection{Progression-aware Evidence Modeling}
\label{sec:evidence}
As discussed above, conventional sequential verification relies primarily on forward local checking, and therefore tends to identify the earliest block at which an error becomes observable rather than the block that actually introduces it. Under error propagation, these two positions may differ: an earlier erroneous block can remain locally consistent with the accepted prefix, while its consequence is only exposed in subsequent reasoning.

To bridge this gap, PRO adopts a \emph{progression-aware} view of verification. Rather than judging each block solely from its preceding context, PRO considers its role within the local reasoning progression. For a block \(b_i\), evidence can be examined in two directions. From the preceding context to \(b_i\), we ask whether the current block is a valid conclusion supported by the problem and the verified prefix; we refer to this as \emph{incoming evidence}. From \(b_i\) to the subsequent reasoning, we ask whether the current block can serve as a valid premise for the continuation; we refer to this as \emph{outgoing evidence}. Together, the two signals characterize whether a block is locally justified while remaining consistent with the reasoning progression.

\textbf{Incoming evidence.}
Let \(\mathcal{P}_{i-1}\) denote the verified prefix before \(b_i\). We define
$V_{\mathrm{in}}(b_i)
=
\mathrm{Verify}_{\mathrm{in}}
\bigl(
b_i;q,\mathcal{P}_{i-1}
\bigr)$,
where \(V_{\mathrm{in}}(b_i)\in\{0,1\}\) indicates whether \(b_i\) is justified by the problem and its accepted prefix. This corresponds to the ordinary forward-anchored verification signal.

\textbf{Outgoing evidence.}
For \(i<n\), PRO further evaluates whether the same block can serve as a valid premise for the immediate continuation:
\begin{equation}
V_{\mathrm{out}}(b_i)
=
\mathrm{Verify}_{\mathrm{out}}
\bigl(
b_i;q,b_{i+1}
\bigr).
\end{equation}
Rather than reversing the full reasoning chain, this test examines whether the local progression from \(b_i\) to \(b_{i+1}\) remains coherent. For the terminal block, we set \(V_{\mathrm{out}}(b_n)=1\).

The incoming and outgoing signals jointly define a progression-aware verification state for each block. Specifically, we introduce:
\begin{equation}
s_i
=
\begin{cases}
\mathrm{R}, & V_{\mathrm{in}}(b_i)=0,\\[2pt]
\mathrm{A}, & V_{\mathrm{in}}(b_i)=1 \ \wedge\ V_{\mathrm{out}}(b_i)=1,\\[2pt]
\mathrm{M}, & V_{\mathrm{in}}(b_i)=1 \ \wedge\ V_{\mathrm{out}}(b_i)=0,
\end{cases}
\end{equation}
where \(\mathrm{R}\), \(\mathrm{A}\), and \(\mathrm{M}\) denote \emph{rejection}, \emph{agreement}, and \emph{mismatch}, respectively. The corresponding index sets are $\mathcal{R}=\{i:s_i=\mathrm{R}\}$,
$\mathcal{A}=\{i:s_i=\mathrm{A}\}$, and
$\mathcal{M}=\{i:s_i=\mathrm{M}\}$, respectively.

This partition reflects the asymmetric roles of the two verification signals. A block enters \(\mathcal{R}\) as soon as its incoming evidence fails, indicating that the inconsistency is already exposed under forward verification. When the incoming evidence is positive, the outgoing signal further distinguishes between progression agreement and progression mismatch. Thus, \(\mathcal{A}\) contains blocks that are supported by both their preceding context and subsequent progression, whereas
$\mathcal{M}
=
\left\{
i:
V_{\mathrm{in}}(b_i)=1,\,
V_{\mathrm{out}}(b_i)=0
\right\}$ captures the cases in which forward verification alone remains insufficient. We refer to these positions as \emph{progression mismatches}.

Among the three verification states, \(\mathcal{R}\) and \(\mathcal{A}\) admit relatively direct interpretations: blocks in \(\mathcal{R}\) are already rejected by the incoming verification, whereas blocks in \(\mathcal{A}\) remain consistent with both their preceding context and subsequent progression. The ambiguity therefore concentrates in \(\mathcal{M}\). For \(i\in\mathcal{M}\), the incoming evidence accepts \(b_i\), while the outgoing evidence indicates that the progression from \(b_i\) to \(b_{i+1}\) is inconsistent. This disagreement localizes the uncertainty to the neighborhood of the transition, but does not yet reveal whether the underlying error lies in \(b_i\), in \(b_{i+1}\), or in a finer reasoning step hidden by the current block decomposition. PRO therefore treats \(\mathcal{M}\) as the set of unresolved progression regions and subjects only these regions to the granularity-refinement stage described next.

\subsection{Mismatch-triggered Granularity Refinement}
\label{sec:refinement}

The mismatch set \(\mathcal{M}\) identifies where forward-local verification becomes insufficient, but it still does not reveal which fine-grained inference actually breaks the progression. In natural-language reasoning, a single block often compresses multiple semantic or computational operations, so the true point of failure may be hidden inside the block or at the transition into the next one. As a result, a progression mismatch should not yet be treated as a final detector decision. Instead, it marks a structurally ambiguous region that requires more precise inspection before source attribution.

To resolve this ambiguity, PRO introduces a mismatch-triggered coarse-to-fine refinement stage. Crucially, this additional verification is not applied to the entire trace. It is activated only for \(i\in\mathcal{M}\), so that the extra effort is concentrated on local regions where the progression signal remains unresolved.

\textbf{Local progression window.} For each \(i\in\mathcal{M}\), we define the local progression window as $\mathcal{W}_i=(b_i,b_{i+1})$, which isolates the neighborhood in which the disagreement first appears. We then decompose this local window into finer reasoning units:
\begin{equation}
\widehat{\mathcal{W}}_i
=
\mathrm{Refine}(\mathcal{W}_i)
=
(u_{i,1},u_{i,2},\ldots,u_{i,m_i}),
\qquad m_i\ge 2.
\end{equation}

\textbf{Fine-grained re-evaluation.}
Once the ambiguous region has been exposed at a finer resolution, PRO re-evaluates whether the local progression still fails:
\begin{equation}
\widehat{V}_i
=
\mathrm{Verify}_{\mathrm{fine}}
\bigl(
\widehat{\mathcal{W}}_i;q
\bigr)
\in\{0,1\}.
\end{equation}
Here, \(\widehat{V}_i=1\) means that the coarse mismatch can be reconciled after refinement, whereas \(\widehat{V}_i=0\) means that the inconsistency persists even under finer inspection. This step therefore determines whether the original disagreement reflects only a coarse decomposition artifact or a more robust local failure.

\textbf{Refined detector state.}
These refined outcomes are then folded back into the detector stage. Specifically, the detector state is defined as: \(D_i=1\) for \(i\in\mathcal{R}\) and for \(i\in\mathcal{M}\) with \(\widehat{V}_i=0\), whereas \(D_i=0\) for \(i\in\mathcal{A}\) and for \(i\in\mathcal{M}\) with \(\widehat{V}_i=1\).

This definition preserves the direct interpretations of \(\mathcal{R}\) and \(\mathcal{A}\), while using refinement only to adjudicate the ambiguous mismatch cases. The earliest position at which an inconsistency survives refinement is:

\begin{equation}
\tau_{\mathrm{det}}(q,\mathcal{B})
=
\min
\left(
\{i\mid D_i=1\}\cup\{\infty\}
\right).
\end{equation}

In this way, refinement converts a coarse progression mismatch into a more reliable detector signal. We emphasize that \(\tau_{\mathrm{det}}\) is still an \emph{exposure-oriented} quantity: it marks the earliest robust point at which the trajectory can no longer be locally reconciled after ambiguity has been reduced, rather than directly asserting the first erroneous block. This refined exposure signal is therefore what the next stage uses as an anchor for source attribution.

\subsection{Origin-oriented Candidate Attribution}
\label{sec:localization}

The refined detector output identifies where the failure becomes robustly observable, but observable failure is still not the same as causal origin. A downstream block may be the first point that remains irreconcilable after refinement, even though the actual error was introduced earlier and only propagated to that position. The role of the final stage is therefore to reason backward from this refined exposure signal and determine which earlier block best explains the observed failure.

\textbf{Candidate shortlisting.}
Let the refined detector return
\(
j_{\mathrm{det}}\in\{1,\ldots,n\}
\);
if no inconsistency survives refinement, PRO predicts that no localizable error has been detected. Otherwise, \(j_{\mathrm{det}}\) provides an anchor for source localization. In principle, the error origin should not occur later than the point where the failure becomes observable. In the implemented pipeline, we therefore use \(j_{\mathrm{det}}\) to condition candidate attribution while still allowing the proposal stage to score the full trace, so that the detector output is treated as a strong exposure-aware hypothesis rather than a hard truncation rule.

Rather than treating every proposed block equally, PRO first obtains an ordered proposal list over the full trace as:

\begin{equation}
\pi
\bigl(
q,\mathcal{B}
\bigr)
=
(c_1,\ldots,c_m),
\qquad
m\le k,
\end{equation}
Here, \(\pi(q,\mathcal{B})\) denotes the ordered output of an LLM-based source proposal function over the full trace. The detector position is then retained in the shortlist so that the exposed failure itself remains available as a competing hypothesis even when the proposal model would otherwise omit it. We write the resulting candidate set as:

\begin{equation}
\pi_k^{\star}
\bigl(
q,\mathcal{B};j_{\mathrm{det}}
\bigr)
=
(c_1^{\star},\ldots,c_{m^{\star}}^{\star}),
\qquad
m^{\star}\le k.
\end{equation}
This shortlisting step narrows the attribution problem from the full trajectory to a small set of plausible source hypotheses, while keeping the refined detector position available as a fallback explanation.

\textbf{Intervention-based source evidence.}
Observation alone may still favor a downstream manifestation, particularly when several preceding blocks remain locally plausible under the corrupted trajectory. To distinguish these competing hypotheses, PRO introduces a lightweight intervention for each shortlisted candidate. For \(c_r^{\star}\), the corresponding block is locally repaired under the same preceding context:
\begin{equation}
\widetilde{b}_{c_r^{\star}}
=
\mathrm{Rewrite}
\bigl(
q,\mathcal{P}_{c_r^{\star}-1},
b_{c_r^{\star}}
\bigr).
\end{equation}
The repaired block is then re-evaluated using the same incoming and outgoing evidence. Let
\(
\widetilde{V}_{\mathrm{in}}^{(r)}
\)
and
\(
\widetilde{V}_{\mathrm{out}}^{(r)}
\)
denote the resulting verification signals. We define:
\begin{equation}
S_r
=
\mathbb{I}
\left[
\mathrm{Fuse}
\left(
\widetilde{V}_{\mathrm{in}}^{(r)},
\widetilde{V}_{\mathrm{out}}^{(r)}
\right)
=
\mathrm{CORRECT}
\right],
\end{equation}
with an optional local refinement when the fused verdict remains ambiguous.

The purpose of \(S_r\) is not to independently certify a candidate as the first error. Instead, it measures whether repairing that candidate removes the local inconsistency associated with the observed progression failure. A candidate whose repair restores consistency receives stronger evidence of lying upstream of the exposure than one whose repair leaves the failure unchanged. In this way, rewrite-based verification functions as source-oriented intervention evidence rather than as a standalone correction heuristic.

\textbf{Source attribution.}
Once the detector anchor and intervention-based evidence have been assembled, PRO makes the final attribution decision by jointly considering candidate chronology, detector position, trajectory context, and intervention outcomes:

\begin{equation}
j_{\mathrm{src}}
=
\mathrm{Rerank}
\left(
q,
\mathcal{B},
j_{\mathrm{det}},
\left\{
(c_r^{\star},r,S_r)
\right\}_{r=1}^{m^{\star}}
\right).
\end{equation}
The localized first-error prediction can be formulated as:
\begin{equation}
f(q,\mathcal{B})
=
\begin{cases}
j_{\mathrm{src}},
&
j_{\mathrm{src}}
\in
\{c_1^{\star},\ldots,c_{m^{\star}}^{\star}\},
\\
j_{\mathrm{det}},
&
\text{otherwise}.
\end{cases}
\end{equation}

This final stage completes the progression-aware pipeline introduced above. PRO first identifies where the trajectory becomes robustly inconsistent, then uses that exposure signal to organize candidate hypotheses, test them through intervention, and trace the failure back to its most plausible source. By explicitly separating exposure detection from source attribution, PRO can use downstream consequences as retrospective evidence without assuming that the first rejected block is necessarily the first erroneous one.

\section{Experiments}
\subsection{Experimental Setting}
We compare PRO with three baselines: Holistic Verification, GoV \citep{fang2025graphverification}, and PARC \citep{mukherjee2025parc}. Holistic Verification evaluates the full reasoning trace in a single pass, GoV represents forward-only decomposition-based verification, and PARC provides a premise-augmented graph baseline. All experiments use deterministic decoding at temperature \(0\), and PRO uses a detector-conditioned shortlist of \(k=5\). For ProcessBench, Holistic Verification follows the official leaderboard result, while the other methods are evaluated under matched backbone settings.

We evaluate PRO on three tasks with complementary reasoning structures. For open-form reasoning, we use the GSM8K subset of ProcessBench~\citep{zheng2025processbench}, containing 400 traces with human-annotated first-error positions. For medical reasoning, we construct a 450-trace MedReason benchmark from publicly available traces~\citep{wu2025medreason} following the MedPRMBench corruption protocol~\citep{wu2026medprmbench}. For structured arithmetic reasoning, we use Number Triangle Summation with depths ($N\in\{2,4,6,8\}$) and a 50\% corruption rate. Metric are detailed in Appendix~\ref{sec:appendix-metrics}.

\subsection{Open-Form Reasoning Experiment}

\begin{wraptable}{r}{0.5\columnwidth}
\vspace{-1.5em}
\centering
\tiny
\setlength{\tabcolsep}{1.1pt}
\renewcommand{\arraystretch}{0.84}
\caption{ProcessBench. Performance comparison on ProcessBench (\%).}
\vspace{-10pt}
\label{tab:processbench_results}
\begin{tabular}{@{}clcccc@{}}
\toprule
\makecell[c]{Method} & \makecell[c]{Metric} & \makecell[c]{Qwen2.5-32B\\-Instruct} & \makecell[c]{Qwen2.5-72B\\-Instruct} & \makecell[c]{DeepSeek\\-V3.2} & \makecell[c]{Llama3.3-70B\\-Instruct} \\
\midrule
\multirow{2}{*}{\shortstack{Holistic\\Verification}} & Correct Acc & 98.45 & 93.26 & 79.27 & 94.30 \\
& Error Acc & 45.89 & 65.70 & 80.68 & 68.60 \\
\midrule
\multirow{2}{*}{PARC} & Correct Acc & 83.94 & 75.13 & 82.90 & 89.12 \\
& Error Acc & 57.97 & 68.60 & 77.78 & 63.77 \\
\midrule
\multirow{2}{*}{GoV} & Correct Acc & 94.82 & 97.41 & 95.34 & 93.26 \\
& Error Acc & 68.12 & 71.01 & 79.71 & 68.60 \\
\midrule
\multirow{2}{*}{PRO} & Correct Acc & 94.30 & 90.67 & 95.34 & 92.75 \\
& Error Acc & 68.12 & 76.81 & 80.68 & 70.53 \\
\bottomrule
\end{tabular}
\vspace{-1.6em}
\end{wraptable}

We first evaluate PRO under open-form reasoning using the GSM8K subset of ProcessBench. This setting contains naturally written multi-step solutions whose intermediate steps are not constrained by a fixed reasoning template, making the correspondence between the first erroneous step and the first observable failure less explicit. This experiment is designed to assess whether PRO can reliably localize the error source when reasoning errors may propagate through semantically plausible intermediate steps before becoming observable.

\begin{wrapfigure}{r}{0.4\columnwidth}
\vspace{-1.2em}
\centering
\includegraphics[width=0.42\columnwidth]{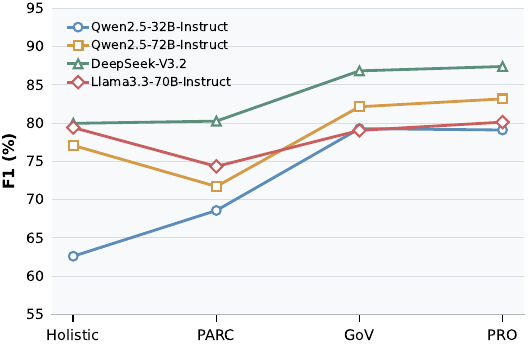}
\vspace{-15pt}
\caption{F1 trends across methods. Each line denotes one backbone.}
\label{fig:processbench-main-results}
\vspace{-15pt}
\end{wrapfigure}
Table~\ref{tab:processbench_results} reports the quantitative comparison across different verification methods and backbone models. PRO generally achieves stronger first-error localization while maintaining competitive performance on correct traces, indicating that progression-aware verification provides a more reliable basis for source attribution than holistic or purely forward-local verification. 
Figure~\ref{fig:processbench-main-results} further illustrates the overall performance trend across backbones. The visual comparison shows that the advantage of PRO is not tied to a particular model, but persists across different backbone choices. This consistency suggests that the improvement mainly comes from the verification strategy itself by explicitly separating error exposure from error origin, PRO is better able to recover the source of failures in loosely structured reasoning traces.

\subsection{Medical Reasoning Experiment}
\begin{wraptable}{r}{0.54\columnwidth}
\vspace{-1.2em}
\centering
\tiny
\setlength{\tabcolsep}{1.1pt}
\renewcommand{\arraystretch}{0.84}
\caption{MedReason. Performance comparison on MedReason (\%).}
\label{tab:medreason_results}
\begin{tabular}{@{}clcccc@{}}
\toprule
\makecell[c]{Method} & \makecell[c]{Metric} & \makecell[c]{Qwen2.5-32B\\-Instruct} & \makecell[c]{Qwen2.5-72B\\-Instruct} & \makecell[c]{DeepSeek\\-V3.2} & \makecell[c]{Llama3.3-70B\\-Instruct} \\
\midrule
\multirow{2}{*}{\shortstack{Holistic\\Verification}} & Correct Acc & 98.67 & 84.89 & 33.33 & 95.56 \\
& Error Acc & 17.78 & 27.11 & 44.89 & 11.56 \\
\midrule
\multirow{2}{*}{PARC} & Correct Acc & 88.00 & 78.22 & 68.44 & 89.33 \\
& Error Acc & 25.33 & 35.56 & 23.56 & 17.78 \\
\midrule
\multirow{2}{*}{GoV} & Correct Acc & 81.78 & 80.44 & 59.11 & 88.44 \\
& Error Acc & 27.56 & 28.89 & 33.78 & 12.44 \\
\midrule
\multirow{2}{*}{PRO} & Correct Acc & 71.11 & 72.44 & 48.89 & 86.67 \\
& Error Acc & 40.00 & 38.22 & 42.22 & 27.11 \\
\bottomrule
\end{tabular}
\vspace{-1.4em}
\end{wraptable}
We evaluate PRO on medical reasoning to examine whether its source-localization capability extends beyond general-purpose reasoning to a domain with stronger semantic and clinical constraints. Compared with open-form arithmetic reasoning, medical traces introduce an additional challenge: intermediate statements can remain medically plausible in isolation even when an earlier decision has already placed the reasoning on an incorrect diagnostic path. As a result, the first locally suspicious step may be separated even further from the decision that actually introduces the error. The MedReason setting therefore tests whether PRO can preserve accurate first-error localization when error propagation occurs within domain-consistent but clinically consequential reasoning trajectories.

\begin{wrapfigure}{r}{0.42\columnwidth}
\vspace{-1.5em}
\centering
\includegraphics[width=0.42\columnwidth]{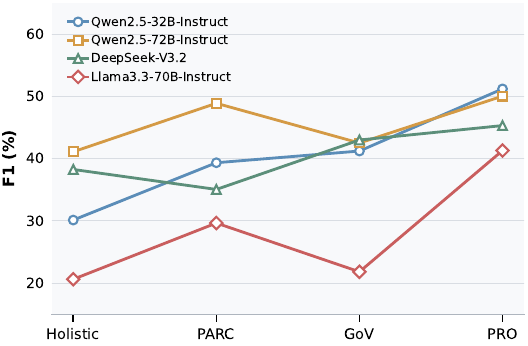}
\vspace{-15pt}
\caption{MedReason F1 trends. Each line denotes one backbone.}
\label{fig:medreason-f1-trends}
\vspace{-15pt}
\end{wrapfigure}

Table~\ref{tab:medreason_results} reports the quantitative comparison on MedReason. PRO shows consistently stronger first-error localization across the evaluated backbones, with the improvement primarily reflected in its ability to identify erroneous traces while retaining reliable recognition of correct reasoning. This suggests that progression-aware verification is beneficial in medical reasoning, where locally plausible statements can obscure an earlier diagnostic error and make forward-local judgments less reliable for source attribution. 

Figure~\ref{fig:medreason-f1-trends} further shows the overall performance trend across different backbones. PRO maintains a clear and stable advantage despite changes in the underlying model, indicating that the improvement is not specific to a particular backbone. This trend supports the central premise of PRO, under domain-constrained reasoning, downstream clinical coherence does not necessarily identify the true error source, and explicitly reasoning over error progression helps recover the earlier step that initiates the failure.

\subsection{Cross-Backbone Statistics}

To assess whether the improvement of PRO is robust to the choice of backbone, we further aggregate its performance gains across the four evaluated models. Specifically, for each benchmark and baseline, we compute the F1 improvement of PRO under each backbone and then average these gains across backbones.

\begin{wrapfigure}{r}{0.46\textwidth}
\vspace{-1em}
\centering
\includegraphics[width=0.44\textwidth]{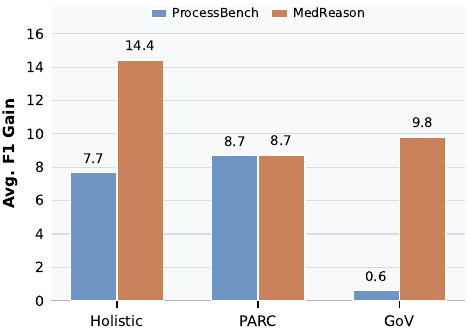}
\vspace{-10pt}
\caption{Cross-backbone summary statistics.}
\label{fig:main_result_gain_summary}
\vspace{-2em}
\end{wrapfigure}
Figure~\ref{fig:main_result_gain_summary} summarizes the resulting average gains on ProcessBench and MedReason. PRO maintains positive improvements over all baselines across both benchmarks, indicating that its advantage is not driven by a particular backbone or isolated experimental configuration. The improvement is especially more pronounced on MedReason, where clinically plausible intermediate reasoning makes the separation between error exposure and error origin more challenging. 

\subsection{Structured Arithmetic Reasoning}

\begin{wraptable}{l}{0.44\textwidth}
\vspace{-1.5em}
\captionof{table}{The results on Number Triangle Summation (\%).}
\vspace{-10pt}
\centering
\scriptsize
\setlength{\tabcolsep}{5pt}
\begin{tabular}{lcc}
\toprule
\multirow{2}{*}{Problem Size} & \multicolumn{2}{c}{Llama3.3-70B-Instruct} \\
\cmidrule(lr){2-3}
 & GoV & PRO \\
\midrule
$N=2$ & \textbf{99.40} & 98.37 \\
$N=4$ & 90.59 & \textbf{94.13} \\
$N=6$ & 72.30 & \textbf{79.82} \\
$N=8$ & 46.62 & \textbf{56.54} \\
\bottomrule
\end{tabular}
\label{tab:nts}
\vspace{-1.5em}
\end{wraptable}
We further evaluate PRO on Number Triangle Summation. Unlike open-form and medical reasoning, this task has an explicit dependency structure, while the reasoning depth can be systematically varied through the problem size \(N\). Table~\ref{tab:nts} reports the results across different reasoning depths. When the reasoning chain is short, PRO and GoV exhibit comparable performance, as an introduced error has limited opportunity to propagate before becoming observable. As the reasoning depth increases, PRO shows an increasingly clearer advantage. This trend suggests that the benefit of progression-aware localization becomes more pronounced as the distance between error origin and error exposure grows. The controlled results therefore complement the observations on ProcessBench and MedReason, showing that PRO's advantage is closely associated with error propagation rather than with a particular reasoning domain or semantic structure.

\subsection{Ablation Study}

\begin{wraptable}{r}{0.50\textwidth}
\vspace{-15pt}
\captionof{table}{The ablation study of PRO.}
\vspace{-5pt}
\label{tab:ablation_processbench_qwen32}
\centering
\scriptsize
\setlength{\tabcolsep}{5pt}
\begin{tabular}{ccccc}
\toprule
Backward & Refinement & Top-$k$ Rerank & Err. Acc. & F1 \\
\midrule
\(\times\) & \(\times\) & \(\times\) & 56.28 & 68.64 \\
\(\times\) & \checkmark & \checkmark & 58.94 & 69.28 \\
\checkmark & \(\times\) & \checkmark & 50.72 & 64.51 \\
\checkmark & \checkmark & \(\times\) & 44.93 & 57.78 \\
\checkmark & \checkmark & \checkmark & \textbf{68.12} & \textbf{79.10} \\
\bottomrule
\end{tabular}
\vspace{-15pt}
\end{wraptable}
We further examine the contribution of each component of PRO on ProcessBench using Qwen2.5-32B-Instruct. Table~\ref{tab:ablation_processbench_qwen32} compares the full model with variants of PRO. The results show that all three components contribute to first-error localization. Removing outgoing evidence weakens the ability to distinguish locally plausible blocks from progression-inconsistent ones, while removing refinement makes ambiguous detector outputs harder to resolve. The largest degradation occurs without top-\(k\) reranking, indicating that detector-conditioned source attribution is particularly important for converting exposed failures into accurate error origins. 

\section{Conclusion}
In this paper, we identify a key limitation of sequential reasoning verification. Under error propagation, the first observable failure may not be the step that actually introduces the error. To address this \textit{source--exposure} gap, we propose a training-free framework PRO that models how verification evidence evolves along the reasoning trajectory. PRO combines progression-aware evidence, selective refinement, and source-oriented attribution to trace exposed failures back to their likely origin. Its effectiveness comes from separating exposure detection from source localization and using downstream consequences as retrospective evidence for earlier errors. Experiments across diverse reasoning settings validate this design. More broadly, our results suggest that reliable reasoning verification should model error propagation rather than rely solely on local stepwise judgments. Future work will extend this work to explor longer reasoning traces.

\subsection*{AI use statement}
We used generative AI tools only for limited writing assistance during manuscript preparation, including language polishing, grammar correction, and improvements to stylistic clarity. We did not use generative AI tools for method design, experimental design, result generation, or scientific claim validation. All AI-assisted content was reviewed by the authors. The authors verified all technical content, method design, experimental settings, results, and final wording decisions, and take full responsibility for the final contents of this paper.

\bibliography{library}
\bibliographystyle{paperstyle}

\appendix

\vspace{0.5em}

\setcounter{figure}{0}
\renewcommand{\thefigure}{A\arabic{figure}}

\setcounter{table}{0}
\renewcommand{\thetable}{A\arabic{table}}

\setcounter{equation}{0}
\renewcommand{\theequation}{A\arabic{equation}}

\setcounter{theorem}{0}
\renewcommand{\thetheorem}{\arabic{theorem}}

\section*{Appendix}\label{appendix}
This appendix provides supplementary material for the main paper. Appendix A extends the related-work discussion beyond the compact summary in the main text. Appendix B introduces the implementation-side preliminaries repeatedly used in the supplementary sections. Appendix C gives the theoretical proofs. Appendix D summarizes the main steps of PRO from the detector to the source-localization stage. Appendix E records the experimental setting and reproducibility protocol. Appendix F presents an additional qualitative case study.

\section{Related Work}
\subsection{Reasoning Verification}
Reasoning verification is commonly instantiated either as holistic verification over the full chain or as decomposition-based verification over intermediate units \citep{cobbe2021training,uesato2022solving,fang2025graphverification}. Holistic verifiers provide a single global judgment over the complete trace and remain a strong baseline in many mathematical reasoning settings, especially when the main objective is to determine whether a full solution is acceptable. Their limitations become more visible when the trajectory is long, when multiple local dependencies interact, or when the task requires localized error attribution rather than a single final verdict.

Decomposition-based methods instead expose intermediate units to the verifier, trading simplicity for finer-grained evidence and stronger interpretability. Existing approaches differ in what they treat as the basic unit of verification, ranging from individual steps to adaptive blocks that follow task structure. GoV is representative of the latter direction because it treats verification granularity as a structural choice rather than a fixed design constant \citep{fang2025graphverification}. Recent benchmarks such as ProcessBench and DeltaBench make this distinction especially salient by showing that earliest-error localization remains difficult even for strong process verifiers \citep{zheng2025processbench,he2025deltabench}.

\subsection{Process Supervision}
Process supervision provides learning signals at the level of intermediate reasoning steps rather than only final outcomes \citep{uesato2022solving,lightman2023lets}. This line of work has been influential because it shows that reasoning quality cannot always be recovered from outcome supervision alone: step-level labels often reveal errors that are invisible from final answers. PRM-style supervision and its lower-cost extensions, such as Math-Shepherd and OmegaPRM, further demonstrate that intermediate supervision can be made more scalable through semi-automatic annotation or teacher-assisted labeling \citep{wang2024mathshepherd,luo2024improve}.

The broader literature in this area therefore studies not only whether step-level information matters, but also how such information should be collected, represented, and used. Most existing approaches focus on learned critics, reward models, or annotation pipelines for constructing step-level labels. As a result, process supervision has developed into both a methodological direction for training reasoning systems and an evaluation lens for understanding where intermediate errors arise.

\subsection{Self-Correction and Test-Time Verification}
Test-time verification and self-correction improve reasoning by inserting critique, revision, or verification during inference \citep{madaan2023selfrefine,dhuliawala2024chain,wu2024proco,wan2025derailerrerailer}. These methods mainly target answer improvement or trajectory repair, and they highlight the broader value of inference-time feedback without retraining the base model. Some methods emphasize iterative refinement, while others explicitly separate generation from verification so that the model can revise its own reasoning under external checks.

This literature is also closely connected to recent discussions about the limits of intrinsic self-correction. \citet{huang2023large} show that models often fail to reliably repair their own reasoning without sufficiently informative external signals. This observation has motivated continued interest in external critics, verification-guided refinement, and hybrid test-time pipelines that combine detection, critique, and revision.

\section{Preliminaries}
\subsection{Problem Setting}
This paper studies \emph{first-error localization} for multi-step reasoning traces. Given a problem instance \(q\) and a reasoning trace \(\mathcal{B}=\{b_1,\ldots,b_n\}\), the verifier must return the earliest incorrect block index, or \(-1\) if the full trace is valid. Unlike final-answer verification, this setting requires the verifier to reason about intermediate dependencies inside the trajectory rather than only judging the final output. The same formulation is used across all three datasets in our paper, although the trace structure differs across arithmetic programs, free-form mathematical derivations, and medical reasoning chains.

\textbf{Verification Setting:} The central difficulty is that the first \emph{exposed} inconsistency is not always the true \emph{source} error. A downstream block may be the first place where failure becomes visible, even though the underlying mistake originates in an earlier block. As a result, earliest-error localization cannot be reduced to simply finding the first block that looks suspicious under local checking.

\textbf{Detector and Candidate Space:} To handle this mismatch, the appendix repeatedly distinguishes detector-stage evidence from source-localization evidence. Concretely, the detector returns an index \(j_{\mathrm{det}}\in\{-1,1,\ldots,n\}\), where \(-1\) means that no suspicious block survives block-level screening. The subsequent top-\(k\) stage begins with a proposal list over the full trace, but the final shortlist is detector-conditioned: it always contains \(j_{\mathrm{det}}\) when \(j_{\mathrm{det}}\neq -1\), even if the ranker itself would otherwise omit that block.

\textbf{Implementation Conventions:} We also adopt two conventions that are used throughout the appendix. For the final block \(b_n\), we set \(V_{\mathrm{out}}(b_n)=1\), so that the terminal block is not marked uncertain merely because no successor exists. In addition, we follow a fail-closed protocol: empty responses, malformed verdicts, and API-side failures are not silently promoted to correct judgments, but are treated as uncertain or failed verification outcomes.

\subsection{Goals}
The purpose of PRO is not merely to detect that a reasoning trace fails, but to improve the reliability of earliest-source localization under black-box, training-free verification. To make this objective concrete, the method is designed around the following goals.

\begin{itemize}
\item \textbf{Localization Fidelity.} The verifier should recover the earliest source error rather than only the first downstream block that visibly breaks. This requires the pipeline to distinguish causal origin from propagated inconsistency, rather than treating every exposed downstream failure as equally informative.

\item \textbf{Structural Robustness.} The method should remain effective when trace length, block granularity, and local dependency structure vary substantially across datasets. In other words, the verifier should not depend on a single fixed notion of what a ``natural'' reasoning unit looks like, since arithmetic traces, free-form derivations, and medical reasoning chains expose very different local structures.

\item \textbf{Training-Free Deployability.} The full pipeline should work without additional process-supervision labels, learned reward models, or task-specific verifier training. This constraint is important because the intended setting is black-box verification rather than supervised critic construction, and therefore the method must rely on pipeline design rather than extra training signals.
\end{itemize}

These goals motivate the notation used in the remainder of the appendix: later proofs and analyses repeatedly rely on the distinction between incoming-side signals, outgoing-side mismatch, detector-conditioned candidate construction, and rewrite-supported reranking.

\subsection{Challenges}
The above goals are difficult to satisfy simultaneously because earliest-source localization is not simply a stronger form of ordinary correctness checking. The core difficulty lies in identifying the causal origin of failure rather than merely the first place where failure becomes visible, and doing so under reasoning structures whose local granularity may itself be unstable. In our setting, the main challenges are therefore structural rather than purely operational.

\textbf{Challenge I: Exposure-Source Mismatch.} In many error traces, the earliest block that looks suspicious is not the true origin of failure. This creates a systematic gap between detection and source attribution, because local verification evidence may first become visible only after the original mistake has already propagated. Conventional stepwise verification is therefore biased toward the first exposed downstream inconsistency, even when that inconsistency is merely a consequence of an earlier hidden mistake. The challenge is not only to flag that something has gone wrong, but to separate propagated failure from causal origin without assuming access to gold intermediate labels.

\textbf{Challenge II: Structural Ambiguity Under Variable Granularity.} Even after a suspicious region is exposed, source localization remains difficult because the same local mismatch may admit multiple explanations. A failed bridge can be caused by the current block, by its immediate continuation, or by a coarser inconsistency inherited from an earlier region. Fine-grained verification helps expose local arithmetic or logical faults, but overly small units may discard the context needed for causal attribution; coarser units preserve context, but they blur the precise location of error. A practical method must therefore refine granularity only where the structure becomes ambiguous, while preserving enough context to decide which candidate block is genuinely source-like.

\section{Theoretical Proof}
The theoretical role of PRO is to formalize why forward-only rejection is insufficient for detector-stage structural exposure, and why outgoing-side structure must be incorporated before source localization can be carried out reliably. To support this claim, we prove two complementary statements: the structural-exposure index used by the detector can strictly precede the ordinary forward rejection index, and incoming-side signals alone are insufficient for universally correct recovery of that detector-stage target.

\subsection{Forward Rejection Versus Structural Exposure}
We first compare the earliest index returned by ordinary forward rejection with the structural-exposure index used by PRO. The claim is that the detector-stage index can never occur later than the forward rejection index, and can in fact be strictly earlier when outgoing-side mismatch exposes a suspicious bridge before any incoming-side rejection occurs.
\begin{proof}
\textbf{Candidate-set inclusion.} To show that structural exposure cannot occur later than forward rejection, we compare the two minimized index sets, which can be written as:
\begin{equation}
\tau_{\mathrm{seq}}(q,\mathcal{B})
=
\min\bigl(\{\,i \mid I_i=0\,\}\cup\{\infty\}\bigr),
\end{equation}
and
\begin{equation}
\tau_{\star}(q,\mathcal{B})
=
\min\bigl(\{\,i \mid I_i=0\,\}\cup \mathcal{M}(q,\mathcal{B})\cup\{\infty\}\bigr),
\end{equation}
where \(\mathcal{M}(q,\mathcal{B})=\{\,i<n \mid I_i=1,\ O_i=0\,\}\).
The key set inclusion can be expressed as:
\begin{equation}
\{\,i \mid I_i=0\,\}\cup\{\infty\}
\subseteq
\{\,i \mid I_i=0\,\}\cup \mathcal{M}(q,\mathcal{B})\cup\{\infty\}.
\end{equation}
Therefore, the desired ordering relation can be formulated as:
\begin{equation}
\tau_{\star}(q,\mathcal{B})\le \tau_{\mathrm{seq}}(q,\mathcal{B}).
\end{equation}

\textbf{Strictness witness.} To show that the inequality can be strict, it is enough to construct a trace in which all incoming-side tests pass, but an earlier outgoing-side mismatch exists. Consider any trace satisfying \(I_i=1\) for all \(i\), and suppose there exists some \(j<n\) such that \(O_j=0\). Then \(j\in\mathcal{M}(q,\mathcal{B})\), so \(\tau_{\star}(q,\mathcal{B})\le j<\infty\), while \(\tau_{\mathrm{seq}}(q,\mathcal{B})=\infty\).
Hence the inequality is strict on this instance.
\end{proof}

\subsection{Necessity of Outgoing-Side Information}
We next show that two traces can share identical incoming-side signals while differing in their detector-stage structural-exposure index. This establishes that incoming-side evidence alone is not sufficient for universally correct recovery of that detector target.
\begin{proof}
\textbf{Matched incoming-side signals.} We construct two traces over the same problem instance \(q\) that share identical incoming-side signals but differ in outgoing-side structure. Let \(\mathcal{B}=(b_1,\ldots,b_n)\) and \(\mathcal{B}'=(b'_1,\ldots,b'_n)\) be two decomposed traces such that \(I_k(\mathcal{B})=I_k(\mathcal{B}')=1\) for all \(k\).

\textbf{Different outgoing-side structure.} Assume furthermore that \(\mathcal{B}\) contains an outgoing-side mismatch at the first block, \(O_1(\mathcal{B})=0\), whereas \(\mathcal{B}'\) has no mismatch at all, with \(O_k(\mathcal{B}')=1\) for all \(k<n\). Then \(\mathcal{M}(q,\mathcal{B})\ni 1\), whereas \(\mathcal{M}(q,\mathcal{B}')=\varnothing\).
Since all incoming-side signals are identical and always equal to \(1\), the difference between the two traces must come entirely from outgoing-side structure. The corresponding detector-stage structural-exposure indices can be expressed as:
\begin{equation}
\tau_{\star}(q,\mathcal{B})=1,
\qquad
\tau_{\star}(q,\mathcal{B}')=\infty.
\end{equation}
Therefore, their separation can be formulated as:
\begin{equation}
\tau_{\star}(q,\mathcal{B})\neq \tau_{\star}(q,\mathcal{B}').
\end{equation}

\textbf{Implication for detector rules.} Any detector rule that depends only on the incoming-side sequence \(\{I_k\}\) would assign the same output to \(\mathcal{B}\) and \(\mathcal{B}'\), because those signals are identical by construction. Such a rule cannot be correct on both traces simultaneously. Hence outgoing-side information is necessary for universally correct recovery of \(\tau_{\star}\).
\end{proof}

\section{The Main Steps of PRO}
PRO proceeds in three stages at test time. First, it performs block-level bidirectional checking using incoming admissibility and outgoing-side support against the immediate continuation. When the two signals disagree, PRO treats the case as a structural ambiguity and enters local refinement rather than issuing an immediate rejection. The theoretical pre-refinement exposure index is \(\tau_{\star}\), while the refined detector output used by the implementation is \(\tau_{\mathrm{det}}\).

For every suspicious region that survives detection, PRO uses the refined detector output as an anchor for candidate attribution. It first obtains an ordered proposal list over the full trace, then constructs a detector-conditioned top-\(k\) shortlist by retaining the detector position when necessary before truncating the list to length \(k\). Each shortlisted candidate is locally rewritten and re-verified under the same verified prefix, and the resulting repair evidence is used for source reranking. If reranking does not return a valid candidate from the shortlist, PRO falls back to the original detector index.

Across datasets, PRO uses the same prompt family for forward verification, outgoing-side verification, edge refinement, candidate proposal, candidate rewriting, repair verification, and source reranking; only the verification unit definition changes with the task. For the terminal block \(b_n\), we set \(V_{\mathrm{out}}(b_n)=1\), so that the absence of a successor does not downgrade the final block. Algorithm~\ref{alg:pro-main} summarizes the complete test-time procedure.

\begin{algorithm}[t]
\caption{The main steps of PRO}
\label{alg:pro-main}
\begin{algorithmic}[1]
\Require \(q\), \(\mathcal{B}=(b_1,\ldots,b_n)\), shortlist size \(k\)
\Ensure \(\hat{f}(q,\mathcal{B}) \in \{-1,1,\ldots,n\}\)
\State \(j_{\mathrm{det}} \gets \infty\)
\For{\(i=1\) to \(n\)}
    \State \(I_i \gets V_{\mathrm{in}}(q,\mathcal{P}_{i-1},b_i)\)
    \State \(O_i \gets \mathbf{1}[i=n] + \mathbf{1}[i<n]\cdot V_{\mathrm{out}}(q,b_i,b_{i+1})\)
    \State \(r_i \gets \varnothing\)
    \If{\(I_i=1 \land O_i=0\)}
        \State \(r_i \gets \mathrm{Refine}(q,\mathcal{P}_{i-1},b_i,b_{i+1})\)
    \EndIf
    \If{\(\mathrm{Detect}(I_i,O_i,r_i)=1\)}
        \State \(j_{\mathrm{det}} \gets i\); \textbf{break}
    \EndIf
\EndFor
\If{\(j_{\mathrm{det}}=\infty\)}
    \State \Return \(-1\)
\EndIf
    \State \((c_1,\ldots,c_m) \gets \mathrm{Propose}(q,\mathcal{B})\)
    \State \((c_1^\star,\ldots,c_{m^\star}^\star) \gets \mathrm{Shortlist}(q,\mathcal{B},(c_1,\ldots,c_m),k,j_{\mathrm{det}})\)
\For{\(r=1\) to \(m^\star\)}
    \State \(j \gets c_r^\star\)
    \State \(\tilde{b}_r \gets \mathrm{Rewrite}(q,\mathcal{P}_{j-1},b_j)\)
    \State \(e_r \gets \mathrm{VerifyRepair}(q,\mathcal{B},j,\mathcal{P}_{j-1},\tilde{b}_r)\)
\EndFor
\State \(j^\star \gets \mathrm{Rerank}(q,(c_1^\star,\ldots,c_{m^\star}^\star),\{(r,e_r)\}_{r=1}^{m^\star},j_{\mathrm{det}})\)
\If{\(j^\star \in \{c_1^\star,\ldots,c_{m^\star}^\star\}\)}
    \State \Return \(j^\star\)
\Else
    \State \Return \(j_{\mathrm{det}}\)
\EndIf
\end{algorithmic}
\end{algorithm}

\section{Experimental Setting}
\label{sec:appendix-exp-setting}

\subsection{Inference and Data Configuration}
All compared methods use the same backbone within each setting. The main evaluation runs use deterministic decoding at temperature \(0\), while any retry behavior follows the released task-specific scripts. PRO uses a detector-conditioned shortlist of \(k=5\). ProcessBench is evaluated on its complete GSM8K, MATH, and OlympiadBench test subsets. The MedReason evaluation set contains 450 traces, comprising 225 unmodified traces and 225 corrupted counterparts balanced across nine source partitions. The injected error position is used as the first-error label, and the same evaluation set is used for every compared backbone.

\subsection{Metrics}
\label{sec:appendix-metrics}
In this paper, F1 always denotes exact first-error localization F1. When dataset annotations permit, we additionally report Correct Accuracy and Error Accuracy. These definitions are used throughout the main text and appendix because they directly reflect whether the verifier recovers the true source step rather than merely exposing that some downstream failure exists.

\textbullet\ \textbf{Correct Accuracy:} The proportion of fully correct reasoning traces that are correctly judged as error-free.

\textbullet\ \textbf{Error Accuracy:} The proportion of erroneous reasoning traces for which the method successfully identifies the first incorrect step.

\textbullet\ \textbf{F1:} The harmonic mean of Correct Accuracy and Error Accuracy, used as the primary metric for overall first-error localization performance.

\section{Qualitative Case Study}
To understand how the gain appears at the trajectory level, we inspect a representative MedReason example in which GoV fails to recover the annotated source step, while PRO localizes it correctly after candidate-conditioned rewriting and reranking.

\begin{figure}[t]
\centering
\includegraphics[width=0.72\textwidth]{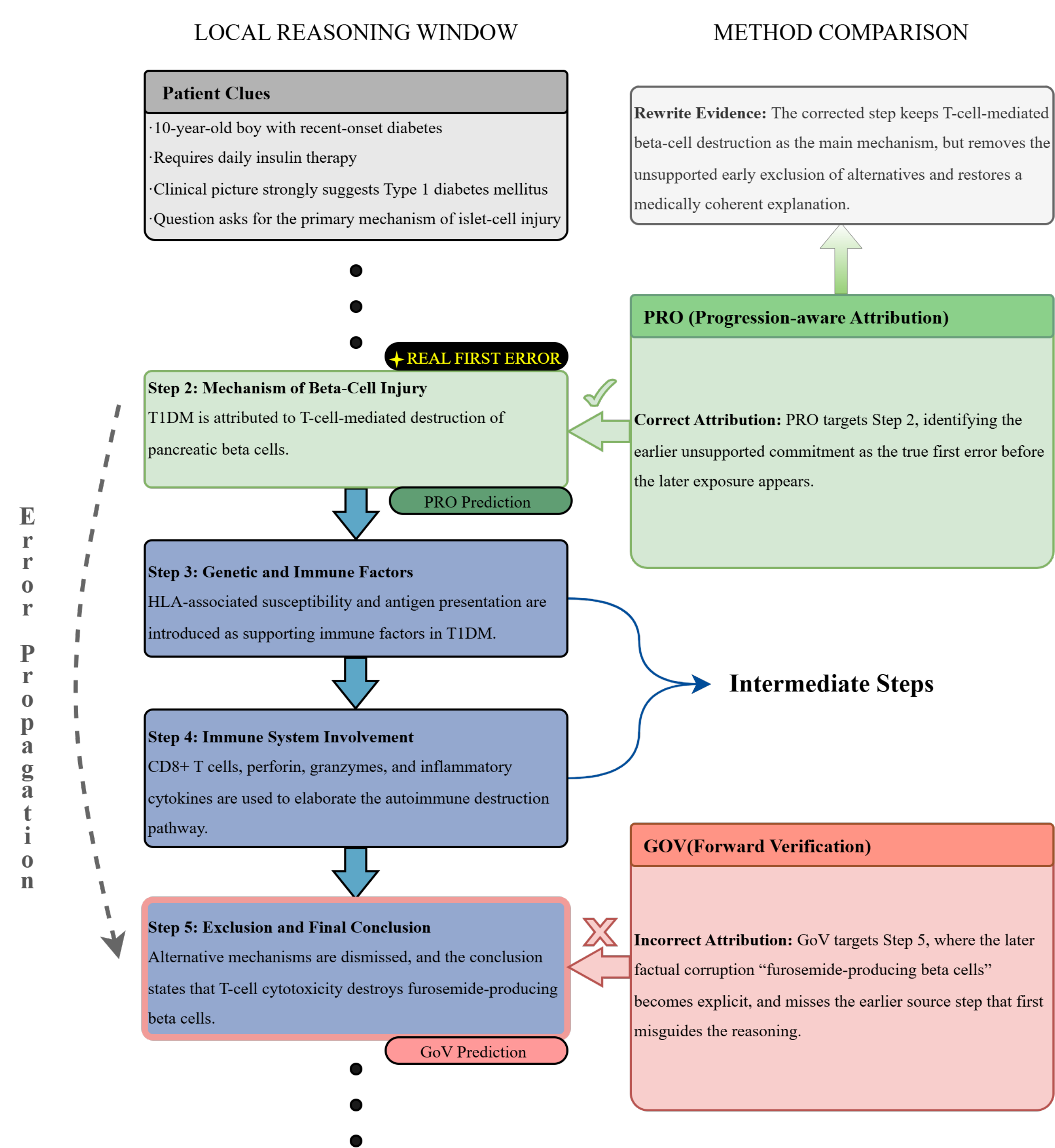}
\caption{Representative MedReason case study. Type 1 diabetes reasoning trajectory. The true first error is at Step 2. GoV misattributes the failure to Step 5 where corruption becomes explicit, while PRO correctly identifies the early source error.}
\label{fig:medreason_case}
\end{figure}

The selected example asks for the primary mechanism of islet-cell injury in a 10-year-old boy with recent-onset diabetes requiring daily insulin therapy, whose clinical picture strongly suggests Type 1 diabetes mellitus. In the corrupted trajectory, the downstream intermediate steps (Steps 3 and 4) remain locally plausible: they correctly introduce HLA susceptibility, antigen presentation, CD8+ T cells and inflammatory cytokines as supporting immune factors for T1DM. The actual corruption appears early in Step 2, in the form of an unsupported early commitment to a causal claim. This error is subtle and propagates through subsequent reasoning; downstream steps remain coherent until Step 5, where the factual mistake (“furosemide-producing beta cells”) becomes explicit.

GoV does not isolate this source step reliably, since the intermediate statements about genetic susceptibility and immune mediators still look acceptable under the corrupted prefix. PRO instead revisits the suspicious region with candidate-conditioned rewriting, tests whether a revised version restores consistency with both the preceding clinical context and the final conclusion, and then reranks the surviving candidates by source plausibility. In this case, PRO identifies Step 2 as the first erroneous block and recovers the true source location. This example illustrates why progression-aware attribution helps in medically constrained traces: locally plausible intermediate reasoning steps can still be built upon an earlier unjustified claim, and correcting that initial flawed assembly point matters more than simply flagging the first downstream factual mismatch.

\end{document}